\DeclareMathOperator*{\argmin}{arg\,min}
\newcommand{\X}{\boldsymbol{X}}
\newcommand{\V}{\boldsymbol{V}}
\newcommand{\W}{\boldsymbol{W}}
\newcommand{\Y}{\boldsymbol{Y}}
\newcommand{\Q}{\boldsymbol{Q}}
\newcommand{\Z}{\boldsymbol{Z}}
\newcommand{\M}{\boldsymbol{M}}
\newcommand{\A}{\boldsymbol{A}}
\renewcommand{\H}{\boldsymbol{H}}
\newcommand{\B}{\boldsymbol{B}}
\newcommand{\D}{\boldsymbol{D}}
\newcommand{\R}{\boldsymbol{R}}
\renewcommand{\P}{\boldsymbol{P}}
\newcommand{\I}{\boldsymbol{I}}
\renewcommand{\L}{\boldsymbol{\Lambda}}
\renewcommand{\S}{\boldsymbol{\Sigma}}
\begin{document}
\title{Can multimodal representation learning by alignment preserve modality-specific information?}
\titlerunning{Can alignment preserve modality-specific information?}
%
\author{Romain Thoreau$^{1}$ \and Jessie Levillain$^{1,2}$ \and
Dawa Derksen$^{1}$
}

\authorrunning{R. Thoreau et al.}

\institute{$^1$CNES, Toulouse, France, \email{\{name.surname\}@cnes.fr}\\ $^2$INSA-IMT, Toulouse, France}
\maketitle              

\begin{abstract}
Combining multimodal data is a key issue in a wide range of machine learning tasks, including many remote sensing problems. 
In Earth observation, early multimodal data fusion methods were based on specific neural network architectures and supervised learning. 
Ever since, the scarcity of labeled data has motivated self-supervised learning techniques. State-of-the-art multimodal representation learning techniques leverage the spatial alignment between satellite data from different modalities acquired over the same geographic area in order to foster a semantic alignment in the latent space. 
In this paper, we investigate how this methods can preserve task-relevant information that is not shared across modalities. 
First, we show, under simplifying assumptions, when alignment strategies fundamentally lead to an information loss.
Then, we support our theoretical insight through numerical experiments in more realistic settings.
With those theoretical and empirical evidences, we hope to support new developments in contrastive learning for the combination of multimodal satellite data.
Our code and data is publicly available at \url{https://github.com/Romain3Ch216/alg_maclean_25}.

\keywords{Multimodal representation learning, self-supervised contrastive learning, remote sensing.}
\end{abstract}

\section{Introduction}

\thispagestyle{firstpage}

In a wide range of machine learning tasks, combining multiple data sources (\textit{e.g.} audio and images, optical and radar images in remote sensing) has great potential \cite{baltruvsaitis2018multimodal}. 
For instance, combining multi-source satellite data has proved valuable for the detection of pluvial flood-induced damages \cite{cerbelaud2023mapping}, for the quantification of toxic plumes released by industries and power plants \cite{calassou2024quantifying}, or for tree species identification \cite{astruc2024omnisat}.
To this end, self-supervised contrastive learning (CL) algorithms have recently gained considerable traction in order to learn task-agnostic multimodal representations \cite{liang2022foundations}.
Standard CL algorithms are based on the \textit{multi-view redundancy} assumption, stating that task-relevant information is shared across modalities \cite{sridharan2008information,tsaiself}.
In Earth observation, this assumption has motivated methods that leverage the spatial alignment of data acquired by different sensors over the same geographic area in order to align multimodal representations in the latent space (\textit{e.g.} \cite{marsocci2024cross} and \cite{astruc2024omnisat}). 
Obviously, the \textit{multi-view redundancy} assumption does not hold in many real-world multimodal problems, including classification or regression tasks in remote sensing.
For example, altimetry and optical satellite data provide complementary and independent information on forests properties, such as canopy height and leaf traits, respectively.
Therefore, a fundamental research question is \textit{whether contrastive learning algorithms can preserve modality-specific information, beyond capturing modality-generic information through alignment?}
Recent works have taken an information-theoretic point of view in order to challenge and improve the applicability of CL algorithms to multimodal problems (\textit{e.g.}  \cite{liang2023factorized}, \cite{dufumier2025align}).
In this paper, we take another perspective on multimodal CL and introduce a potential framework in order to investigate success and limitations in multimodal representation learning by alignment. Specifically, our contributions are summarized below:
\begin{itemize}
    \item We introduce the notion of $\sigma$-informativeness, close to the idea of information formalized in \cite{zhang2023learning}, in order to identify different regimes in multimodal representation learning,
    \item We demonstrate that, in a linear setting, standard CL can lead to a fundamental loss of modality-specific information,
    \item Through numerical experiments on simulated and real remote sensing datasets, we show that our theoretical insights are also informative in non-linear settings.
\end{itemize}


\noindent
The paper is organized as follows. In section \ref{sec: setting}, we introduce the necessary mathematical framework for investigating multimodal CL in a linear regime. In section \ref{sec:loss}, we state and prove when representation learning by alignment leads to an information loss. In section \ref{sec:exps}, we extend our theoretical study to a non-linear regime through numerical experiments. In section \ref{sec:related}, we make connections to related work based on information theory. Finally, we discuss about limitations and perspectives in section \ref{sec:conclusion}.  


\section{Multimodal Representation Learning by Alignment}\label{sec: setting}

We consider two data modalities (\textit{e.g.} 1) radar and 2) optical data over a forest area) from which we aim to predict two different targets (\textit{e.g.} 1) canopy height \& above-ground biomass and 2) chlorophyll \& nitrogen concentrations). 
Let us denote $\X_1 \in \mathbb{R}^{N \times D_1}$ the input data from modality 1 and $\X_2 \in \mathbb{R}^{N \times D_2}$ the data from modality 2, where $N$ is the number of training samples, and $D_1, D_2$ are the dimensions of the data. 
Besides, let us denote $\Y_i \in \mathbb{R}^{N \times C_i}$ the targets related to the modality $i$. 
We consider a setting that displays \textit{multi-view non-redundancy}: we assume that modality $\X_i$ is more informative of targets $\Y_i$ than modality $\X_j$, and that it contains little information about targets $\Y_j$.
To be precise, we introduce the definition of $\sigma$-informativeness and reformulate the \textit{multi-view non-redundancy} assumption.

\begin{definition}[$\sigma$-informativeness] Data $\X \in \mathbb{R}^{N \times D}$ is $\sigma$-informative ($ 0 \leq \sigma \leq 1$) with respect to the targets $\Y \in \mathbb{R}^{N \times C}$ if $\|\hat{\Y} - \Y \|_F^2 = NC(1 - \sigma)$, where $N$ is the number of samples, $C$ is the dimension of the targets, and $\hat{\Y}$ is the prediction of the ordinary least squares (OLS) estimator: $\hat{\Y} = \X(\X^T\X)^{-1}\X^T\Y$.
\end{definition}

\noindent
Under the assumption that the targets have a unit variance over every dimensions, $\sigma$ is the coefficient of determination averaged over the $C$ targets. 
The intuition behind $\sigma$-informative data is close to the idea of information formalized in \cite{zhang2023learning}: $\sigma$ measures the amount of information contained in the data $\X$ that can be linearly exploited in order to predict the targets $\Y$.

\begin{definition}[Multi-view non-redundancy] Let the data modality $\X_i$ be $\sigma_{ij}$-informative of targets $\Y_j$. For $i \neq j$, $\sigma_{ii} > \sigma_{ji}$.
\end{definition}

\noindent
Under the \textit{multi-view non-redundancy} assumption, both modalities are useful in order to predict the targets. Note that this definition of \textit{multi-view non-redundancy} is analogue, but not equivalent, to the definition used in related work (\textit{e.g.} \cite{liang2023factorized}). \\

\noindent
\textbf{Framework} In this work, we aim to gain theoretical and empirical insights about the mechanisms of representation learning by alignment. 
As a starting point, we will build intuition in a linear regime where we consider alignment as a regression task, consisting in predicting the representation of modality 2 given the representation of modality 1. 
This modeling choice will be further discussed in section \ref{sec:conclusion}.
In our framework, linear encoders, denoted as $\V_i \in \mathbb{R}^{D_i \times K}$, compute the latent representations $\Z_i = \X_i \V_i$, where $K$ is the dimension of the latent space. 
Predictors, denoted as $\W_i \in \mathbb{R}^{K \times C_i}$,  map the representations to the targets $\Y_i$.
Finally, a linear head $\Q_1 \in \mathbb{R}^{K \times K}$ maps the representations of modality 1 to the representations of modality 2.
We formalize the learning problem as the following optimization problem:
\begin{align}
	\min_{\V_1, \V_2, \W_1, \W_2, \Q_1} & \overbrace{\|\X_1\V_1\W_1 - \Y_1\|_F^2}^{\scriptsize{\mbox{prediction loss 1}}} + \overbrace{\|\X_2\V_2\W_2 - \Y_2\|_F^2}^{\scriptsize{\mbox{prediction loss 2}}} \nonumber
	 \\
	 & + \lambda \underbrace{\|\X_1 \V_1\Q_1 - \X_2\V_2 \|_F^2}_{\scriptsize{\mbox{alignment loss}}}  \label{eq:optim_pb} \\
	\mbox{subject to } & \V_2, \W_2 = \argmin_{\V, \W} \|\X_2 \V \W - \Y_2 \|_F^2 \nonumber
\end{align}
where $\lambda > 0$ controls the trade-off between the alignment and the prediction losses. 
The key question is \textit{whether the prediction and the alignment losses can be jointly minimized}?
In other words, \textit{are optimal model parameters independent of $\lambda$?} \vspace{0.1cm}

\noindent
In the following, we assume that $\X_1$ and $\X_2$ have full rank, that the top-$K$ eigenvalues of $\X_i\X_i^T$ have a multiplicity of one.
We also assume that the eigenvalues of $\Y_i\Y_i^T$, denoted as $\lambda_k$, are such that $\lambda_k > \sum_{i=k+1} \lambda_i$ for every $k$.
Besides, we use the notation $\M = \P_{\M} \D_{\M} \P_{\M}^T$ for the eigendecomposition of a symmetric semi-positive matrix $\M$. The columns of $\P_{\M}$ are the eigenvectors of $\M$ and the diagonal elements of $\D_{\M}$ are its eigenvalues. Besides, $(\M)_{\cdot, k}$ denotes the $k^{\mbox{th}}$ column of $\M$.

\section{How Alignement Leads to an Information Loss} \label{sec:loss}




In this section, we show that the minimization of the alignment loss implies a trade-off between the prediction loss 1 and the prediction loss 2. 
We start by deriving analytical solutions of the learning problem \eqref{eq:optim_pb} in Theorem \ref{th:1}. Then, we formalize in Theorem \ref{th:2} how the encoder $\V_1$ cannot minimize the alignment loss and the prediction loss 1 at the same time, resulting in representations $\Z_1$ that are less informative of the targets $\Y_1$ than the data $\X_1$.

\begin{theorem}[Solutions to the learning problem \eqref{eq:optim_pb}] \label{th:1} The objective function of the optimization problem \eqref{eq:optim_pb} is minimized for $\V_1^\ast, \V_2^\ast, \W_1^\ast, \W_2^\ast, \Q_1^\ast$ such that:
\begin{align*}
	\mbox{\normalfont span}(\V_1^\ast) & = \mbox{\normalfont span}\big(\P_{\X_1^T\X_1} \D_{\X_1^T\X_1}^{-\frac{1}{2}} (\P_{\H_1})_{\cdot, 1:K}\big) \label{eq:spans}\\
	\mbox{\normalfont span}(\V_2^\ast) & = \mbox{\normalfont span}\big(\P_{\X_2^T\X_2} \D_{\X_2^T\X_2}^{-\frac{1}{2}} (\P_{\H_2})_{\cdot, 1:K}\big) \\
	\W_1^\ast & = (\V_1^{\ast T}\X_1^T\X_1\V_1^\ast)^{-1}\V_1^{\ast T}\X_1^T\Y_1 \\
	\W_2^\ast & = (\V_2^{\ast T}\X_2^T\X_2\V_2^\ast)^{-1}\V_2^{\ast T}\X_2^T\Y_2 \\
	\Q_1^\ast & = (\V_1^{\ast T}\X_1^T\X_1\V_1^\ast)^{-1}\V_1^{\ast T}\X_1^T\X_2\V_2^\ast
\end{align*}
where $\mbox{\normalfont span}(\M)$ stands for the linear subspace generated by the columns of $\M$, and:
\begin{align*}
	\begin{cases}
	\H_1 := \D_{\X_1^T\X_1}^{-\frac{1}{2}} \P_{\X_1^T\X_1}^T \A \P_{\X_1^T\X_1} \D_{\X_1^T\X_1}^{-\frac{1}{2}} \\
	\A := \X_1^T(\Y_1\Y_1^T + \lambda\Z_2 \Z_2^T)\X_1 \\
	\Z_2 := \X_2\V_2^\ast \\
	\P_{\H_2} = (\P_{\X_2\X_2^T})_{\cdot, 1:D}^T (\P_{\Y_2\Y_2^T})_{\cdot, 1:D}
	\end{cases}
\end{align*}

\end{theorem}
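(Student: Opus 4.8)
The plan is to treat the optimization problem \eqref{eq:optim_pb} as a nested problem, first solving for the inner constraint and then for the outer objective, exploiting the fact that for fixed encoders the remaining variables enter as ordinary least squares subproblems.

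\textbf{Step 1: Eliminate the predictors and the linear head.} For fixed $\V_1$, the map $\W_1 \mapsto \|\X_1\V_1\W_1 - \Y_1\|_F^2$ is a linear least-squares problem whose minimizer is $\W_1^\ast = (\V_1^T\X_1^T\X_1\V_1)^{-1}\V_1^T\X_1^T\Y_1$ (using that $\X_1$ has full rank and, generically, so does $\X_1\V_1$), which matches the claimed $\W_1^\ast$; likewise for $\W_2^\ast$. Similarly, for fixed $\V_1, \V_2$, minimizing over $\Q_1$ the term $\|\X_1\V_1\Q_1 - \X_2\V_2\|_F^2$ is again least squares and gives $\Q_1^\ast = (\V_1^T\X_1^T\X_1\V_1)^{-1}\V_1^T\X_1^T\X_2\V_2$. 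Substituting $\W_1^\ast$ back, the prediction loss 1 becomes $\|\Y_1\|_F^2 - \mathrm{tr}\big((\V_1^T\X_1^T\X_1\V_1)^{-1}\V_1^T\X_1^T\Y_1\Y_1^T\X_1\V_1\big)$, and similarly substituting $\Q_1^\ast$ turns the alignment loss into $\|\Z_2\|_F^2 - \mathrm{tr}\big((\V_1^T\X_1^T\X_1\V_1)^{-1}\V_1^T\X_1^T\Z_2\Z_2^T\X_1\V_1\big)$, where $\Z_2 = \X_2\V_2$.

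\textbf{Step 2: Reduce to a trace-maximization over subspaces.} Collecting the $\V_1$-dependent terms, minimizing the objective over $\V_1$ is equivalent to maximizing $\mathrm{tr}\big((\V_1^T\X_1^T\X_1\V_1)^{-1}\V_1^T\A\V_1\big)$ with $\A := \X_1^T(\Y_1\Y_1^T + \lambda\Z_2\Z_2^T)\X_1$. Introduce the whitening change of variables $\V_1 = \P_{\X_1^T\X_1}\D_{\X_1^T\X_1}^{-1/2}\U_1$; then $\V_1^T\X_1^T\X_1\V_1 = \U_1^T\U_1$ and $\V_1^T\A\V_1 = \U_1^T\H_1\U_1$ with $\H_1$ as defined in the theorem. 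The problem becomes $\max_{\U_1}\mathrm{tr}\big((\U_1^T\U_1)^{-1}\U_1^T\H_1\U_1\big)$, a Rayleigh-trace (generalized Ky Fan) problem whose value depends only on $\mathrm{span}(\U_1)$, and whose maximum is attained when $\mathrm{span}(\U_1) = \mathrm{span}\big((\P_{\H_1})_{\cdot,1:K}\big)$, the span of the top-$K$ eigenvectors of $\H_1$. Translating back through the change of variables yields the claimed $\mathrm{span}(\V_1^\ast)$. The constraint subproblem for $\V_2, \W_2$ is handled identically, but with $\A$ replaced by $\X_2^T\Y_2\Y_2^T\X_2$ (no alignment term, since $\V_2$ is pinned by the constraint); the extra simplification giving $\P_{\H_2} = (\P_{\X_2\X_2^T})_{\cdot,1:D}^T(\P_{\Y_2\Y_2^T})_{\cdot,1:D}$ comes from rewriting $\X_2^T\Y_2\Y_2^T\X_2$ after whitening and using the SVD of $\X_2$ to relate eigenvectors of $\X_2^T\X_2$, $\X_2\X_2^T$, and $\Y_2\Y_2^T$.

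\textbf{Main obstacle.} The delicate point is the uniqueness/identification of the maximizing subspace in the Ky Fan trace problem: this requires the top-$K$ eigenvalues of $\H_1$ (resp. the relevant matrix for $\H_2$) to be simple and separated from the rest, which is why the hypotheses on the multiplicity of the top-$K$ eigenvalues of $\X_i\X_i^T$ and the gap condition $\lambda_k > \sum_{i>k}\lambda_i$ on the eigenvalues of $\Y_i\Y_i^T$ are imposed — I would need to verify these assumptions indeed propagate through the whitening transformation and the additive perturbation by $\lambda\Z_2\Z_2^T$ to guarantee a well-defined top-$K$ eigenspace of $\H_1$. A secondary subtlety is the self-referential nature of the solution: $\H_1$ depends on $\Z_2 = \X_2\V_2^\ast$, so the characterization is really a fixed-point statement — one must check the outer problem is solved after the inner $\V_2^\ast$ is substituted, which is legitimate precisely because the constraint determines $\V_2^\ast$ independently of $\V_1$ and $\lambda$. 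Finally, one should note these are statements about $\mathrm{span}(\V_i^\ast)$ rather than $\V_i^\ast$ itself, reflecting the invariance of the objective under $\V_i \mapsto \V_i\R$, $\W_i \mapsto \R^{-1}\W_i$ for invertible $\R$.
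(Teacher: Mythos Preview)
Your proposal is correct and follows essentially the same route as the paper: eliminate $\W_1,\W_2,\Q_1$ via least squares, substitute back to obtain a trace-ratio in $\V_1$, whiten with $\V_1 = \P_{\X_1^T\X_1}\D_{\X_1^T\X_1}^{-1/2}\U_1$, and then identify the optimal subspace as the top-$K$ eigenspace of $\H_1$ (the paper phrases this last step as a generalized eigenvalue problem $\A\V_1' = \B\V_1'\L$ under the normalization $\V_1^T\X_1^T\X_1\V_1=\I_K$, whereas you invoke Ky Fan directly, but these are the same argument). Your remarks on the fixed-point nature of $\H_1$ through $\Z_2$ and on the span-only identification are accurate and in fact more explicit than the paper's own treatment.
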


\begin{proof}
The proof is given in Appendix \ref{app:th1}.
\end{proof}

\begin{theorem}[Information loss by alignment]
Let us consider that data $\X_i$ is $\sigma_{ij}$-informative of targets $\Y_j$, and that $\tilde{Y}_{iK}$ is $\sigma_{iK}$-informative of $\Y_i$, where $\tilde{Y}_{iK}$ is the projection of $\Y_i$ on the top-K eigenspaces of $\Y_i^T\Y_i$. \vspace{0.2cm}

\noindent
Under the assumptions that i) $\sigma_{22} \geq \sigma_{2K}$, and ii) $\sigma_{21} < \sigma_{1K}$, then the solution $\V_1^\ast$ of problem \eqref{eq:optim_pb} does not minimize the prediction loss 1 neither the alignment loss. Furthermore, representations $\Z_1$ are $\sigma_{11}^z$-informative of $\Y_1$, with $\sigma_{11}^z < \sigma_{11}$. \label{th:2}
\end{theorem}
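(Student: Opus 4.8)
The plan is to recast each of the three terms of~\eqref{eq:optim_pb}, evaluated at the optimum given by Theorem~\ref{th:1}, as a statement about a single $K$-dimensional subspace of $\mathrm{col}(\X_1)$. Write $\Pi_i := \X_i(\X_i^T\X_i)^{-1}\X_i^T$ for the orthogonal projector onto $\mathrm{col}(\X_i)$, keep $\Z_2 := \X_2\V_2^\ast$ as in Theorem~\ref{th:1} (fixed by the constraint), and let $\Pi_{\Z_1}$ be the projector onto $\mathrm{span}(\X_1\V_1^\ast)$. Since $\W_1^\ast = (\Z_1^T\Z_1)^{-1}\Z_1^T\Y_1$ and $\Q_1^\ast = (\Z_1^T\Z_1)^{-1}\Z_1^T\Z_2$ are the OLS maps onto $\mathrm{span}(\Z_1)$, at the optimum the prediction loss $1$ equals $\|\Y_1 - \Pi_{\Z_1}\Y_1\|_F^2$ and the alignment loss equals $\|\Z_2 - \Pi_{\Z_1}\Z_2\|_F^2$. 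A short computation from the characterization of $\V_1^\ast$ in Theorem~\ref{th:1} (using that $\X_1\P_{\X_1^T\X_1}\D_{\X_1^T\X_1}^{-1/2}$ has orthonormal columns spanning $\mathrm{col}(\X_1)$) shows that $\mathrm{span}(\X_1\V_1^\ast)$ is exactly the top-$K$ eigenspace $S_\lambda$ of $G(\lambda) := \Pi_1(\Y_1\Y_1^T + \lambda\,\Z_2\Z_2^T)\Pi_1$. Two reference subspaces appear as the limits $\lambda\to 0$ and $\lambda\to\infty$: $S_\star$, the top-$K$ eigenspace of $\Pi_1\Y_1\Y_1^T\Pi_1$, which minimizes the prediction loss $1$ over all feasible parameters, and $S_a := \mathrm{col}(\Pi_1\Z_2)$, which minimizes the alignment loss. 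Each of these eigenspaces is well defined and strictly separated from its orthogonal complement thanks to the multiplicity-one and eigenvalue-decay hypotheses.

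With this reduction, the two claims of the theorem become: (a) $S_\lambda \neq S_\star$ and $S_\lambda \neq S_a$ for every $\lambda>0$; and (b) $\|\Y_1-\Pi_{\Z_1}\Y_1\|_F^2 > \|\Y_1 - \Pi_1\Y_1\|_F^2$, i.e.\ $\sigma_{11}^z < \sigma_{11}$. The core is~(a), and it rests on showing that the prediction-optimal subspace $S_\star$ and the alignment-optimal subspace $S_a$ genuinely differ. This is where the hypotheses enter. Since $\mathrm{col}(\Z_2)\subseteq\mathrm{col}(\X_2)$, the subspace $S_a=\mathrm{col}(\Pi_1\Z_2)$ inherits the limited ability of modality $2$ to explain $\Y_1$: the fraction of $\|\Y_1\|_F^2$ it can capture is controlled by $\sigma_{21}$. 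By contrast $S_\star$ realises the best $K$-dimensional linear fit of $\Y_1$ attainable from $\mathrm{col}(\X_1)$ and, since it recovers the top-$K$ eigenstructure of $\Pi_1\Y_1$, the fraction it captures is (essentially) $\sigma_{1K}$. Assumption~(ii), $\sigma_{21}<\sigma_{1K}$, thus forces $S_a\neq S_\star$, and in fact it also excludes the degenerate configuration $\mathrm{col}(\Pi_1\Z_2\Z_2^T\Pi_1)\subseteq S_\star$ in which $S_\star$ would remain an eigenspace of $G(\lambda)$ for all $\lambda$. Assumption~(i), together with the decay hypothesis $\lambda_k>\sum_{i>k}\lambda_i$ on $\Y_2\Y_2^T$, is used to pin $\mathrm{col}(\Z_2)$ down to the top-$K$ eigenspace of $\Pi_2\Y_2\Y_2^T\Pi_2$ (so that the constrained minimizer $\V_2^\ast$, and hence $S_a$, is well determined); the eigenvalue gaps then let me invoke Weyl's inequality to carry the strict separation of $S_\star$ and $S_a$ over to $S_\lambda$ at every finite $\lambda>0$.

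Part~(b) then follows quickly: since $\mathrm{span}(\X_1\V_1^\ast)=S_\lambda$ is a $K$-dimensional subspace of $\mathrm{col}(\X_1)$ different from the \emph{unique} prediction-optimal subspace $S_\star$, its OLS residual on $\Y_1$ is strictly larger, so $\|\Y_1-\Pi_{\Z_1}\Y_1\|_F^2 > \|\Y_1-\Pi_{S_\star}\Y_1\|_F^2 \ge \|\Y_1-\Pi_1\Y_1\|_F^2$, the last quantity being the residual of the OLS fit from all of $\X_1$. Dividing by $NC_1$ and reading this through the definition of $\sigma$-informativeness gives $\sigma_{11}^z<\sigma_{11}$; the strictness of the first inequality is exactly where uniqueness of the top-$K$ eigenspace --- hence the multiplicity-one/gap hypotheses --- is used.

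The step I expect to be the real obstacle is the one in the second paragraph: certifying that the $\lambda$-dependent eigenspace $S_\lambda$ cannot accidentally coincide with $S_\star$ (or with $S_a$) for some intermediate value of $\lambda$, not just in the two limits. This requires turning assumptions~(i)--(ii) into a quantitative statement about how $\mathrm{col}(\Z_2)$ is positioned relative to the eigenspaces of $\Pi_1\Y_1\Y_1^T\Pi_1$ and then feeding that misalignment into a perturbation bound that is uniform in $\lambda$; I expect the strong decay assumption $\lambda_k>\sum_{i>k}\lambda_i$ on the target Gram matrices to be precisely the lever that makes this uniform estimate work.
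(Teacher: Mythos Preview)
Your reduction to a subspace problem is correct and close in spirit to the paper's viewpoint: after substituting the OLS heads from Theorem~\ref{th:1}, the optimal $\mathrm{span}(\X_1\V_1^\ast)$ is indeed the top-$K$ eigenspace $S_\lambda$ of $G(\lambda)=\Pi_1(\Y_1\Y_1^T+\lambda\Z_2\Z_2^T)\Pi_1$, and the theorem becomes a statement about how $S_\lambda$ sits relative to the prediction-optimal subspace $S_\star$ and the alignment-optimal subspace $S_a$. The paper, however, argues by contradiction and works with eigenspaces in the ambient space $\mathbb{R}^N$ rather than inside $\mathrm{col}(\X_1)$, and this distinction matters exactly at the step where you invoke $\sigma_{21}$.

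The genuine gap is your claim that the fraction of $\|\Y_1\|_F^2$ captured by $S_a=\mathrm{col}(\Pi_1\Z_2)$ is ``controlled by $\sigma_{21}$''. The quantity $\sigma_{21}$ measures how well $\mathrm{col}(\X_2)$ explains $\Y_1$, but $S_a$ lives in $\mathrm{col}(\X_1)$, and the projection $\Pi_1$ can move $\Z_2$ \emph{closer} to $\Y_1$ than $\Z_2$ or $\X_2$ ever was; there is no general inequality of the form $\|\Pi_{S_a}\Y_1\|_F^2\le \|\Pi_2\Y_1\|_F^2$. Hence assumption~(ii) gives you no direct grip on $S_a$, and your argument for $S_a\neq S_\star$ does not go through. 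The paper avoids this by never comparing $S_a$ to $\sigma_{21}$ directly: its contradiction hypothesis (that $\V_1^\ast$ is simultaneously prediction- and alignment-optimal, hence $\lambda$-independent) forces the top-$K$ eigenspaces of $\Y_1\Y_1^T$ and $\Z_2\Z_2^T$ to coincide (Step~1); assumption~(i) together with the explicit form of $\V_2^\ast$ then identifies $\mathrm{col}(\Z_2)$ with the top-$K$ eigenspace of $\X_2\X_2^T$ (Steps~2--3). Only after chaining these to place the top-$K$ eigenspace of $\Y_1\Y_1^T$ inside $\mathrm{col}(\X_2)$ does $\sigma_{21}$ enter, now legitimately, as the lower bound $\sigma_{21}\ge\sigma_{1K}$ (Step~5). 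Your plan never lands in $\mathrm{col}(\X_2)$, which is why $\sigma_{21}$ has no leverage; this, rather than the uniform-in-$\lambda$ perturbation step you flagged, is the first real obstacle.
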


\noindent
Basically, assumption i) characterizes how much $\X_2$ is informative of $\Y_2$.
For a usual value of $K$, $\sigma_{2K}$ will be high, and $\X_2$ highly informative of $\Y_2$.
On the contrary, assumption ii) states that $\X_2$ is barely informative of $\Y_1$.
Theorem \ref{th:2} implies that if the prediction loss 2 is minimized, then the prediction task 1 cannot be minimized.
In other words, multimodal representation learning by alignment leads to a fundamental trade-off between the two downstream tasks.
This trade-off translates into a fundamental loss of modality-specific information (in this setting, modality 1).

\begin{proof}
The proof is made by contradiction. 
The idea is to show that if $\V_1^\star$ is the optimal solution of the prediction task 1 and the alignment task, then $\sigma_{21} \geq \sigma_{1K}$, which contradicts assumption ii).
The proof is divided into five steps:
\begin{enumerate}
    \item We assume that $\V_1^\star$ is the optimal solution of the prediction task 1 and the alignment task, and show that, as a result, the intersection of the top-K eigenspaces of $\Z_2\Z_2^T$ and $\Y_1\Y_1^T$ is of dimension $K$,
    \item We show that $\sigma_{22} \geq \sigma_{2K}$ implies that the intersection of the top-K eigenspaces of $\X_2\X_2^T$ and $\Y_2\Y_2^T$ is of dimension $K$,
    \item From step 2, we show that the intersection of the top-K eigenspaces of $\Z_2\Z_2^T$ and $\X_2\X_2^T$ is of dimension $K$,
    \item From step 1 and step 3, we deduce that the intersection of the top-K eigenspaces of $\Y_1\Y_1^T$ and  $\X_2\X_2^T$ is of dimension $K$.
    \item Finally, we show that this implies that $\sigma_{21} \geq \sigma_{1K}$, which contradicts assumption ii).
\end{enumerate}
We include a detailed proof in Appendix \ref{app:th2}.
\end{proof}

\noindent

\section{Preliminary numerical experiments} \label{sec:exps}

In order to extend our theoretical insights in more realistic settings with non-linear encoders, we made preliminary numerical experiments on a controlled synthetic dataset and on a remote sensing dataset.

\subsection{Controlled experiments on a synthetic dataset}

\begin{figure}[h]
    \centering
    \begin{subfigure}{0.45\textwidth}
        \center
        \includegraphics[width=0.8\textwidth]{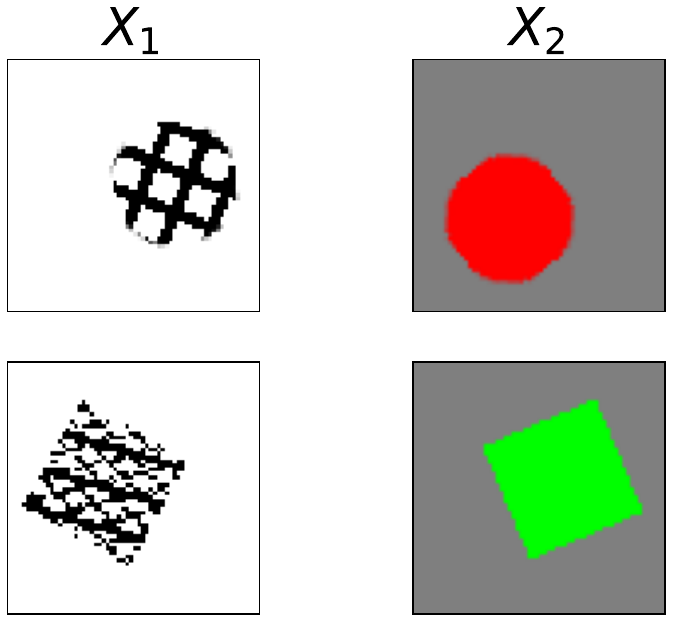}
        \caption{Data samples from the synthetic dataset. Three generative factors -- shape, texture, and color -- control the data generation across two modalities.} \label{fig:data_samples}
    \end{subfigure}
    \hfill
    \begin{subfigure}{0.45\textwidth}
        \center
        \includegraphics[width=\textwidth]{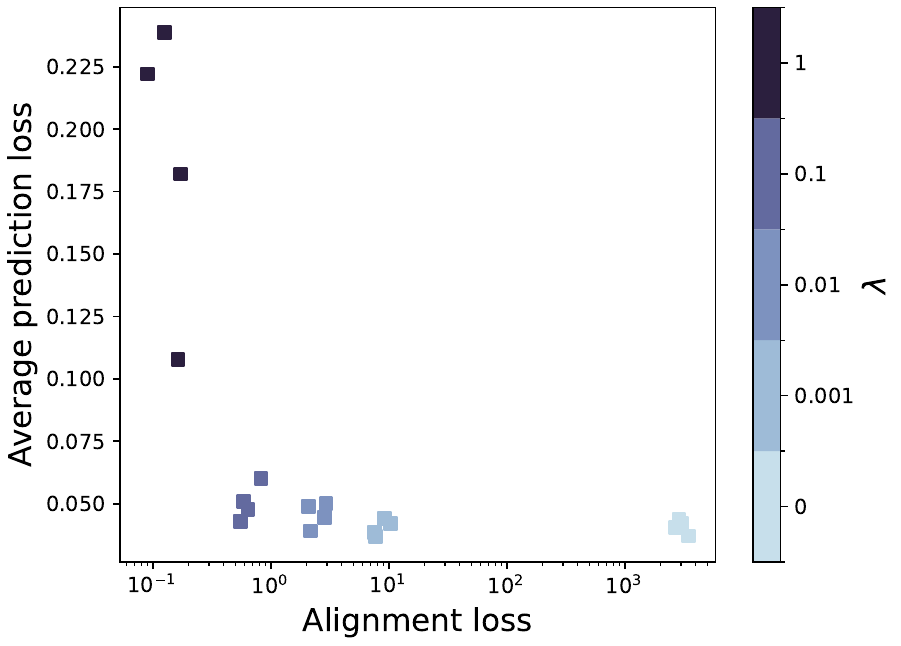}
        \caption{Prediction loss VS alignment loss for several values of the coefficient $\lambda$ and random network initializations.} \label{fig:alg_vs_pred_exp1}
    \end{subfigure}
    \label{fig:trifeatures_exp}
    \caption{Controlled experiments on a synthetic dataset}
\end{figure}

\noindent
In order to precisely control how informative the modalities are with respect to the targets, we designed a multimodal dataset inspired by \cite{dufumier2025align}. The dataset is a variation of the Trifeature dataset \cite{NEURIPS2020_71e9c662}: it comprises two data modalities that share modality-generic features (shape) and carry modality-specific features (texture and color). Fig. \ref{fig:data_samples} shows random data samples. Targets $\Y_1$ (resp. $\Y_2$) comprise ground truth information about the shape and the texture (resp. about the shape and the color). Our experiment consists in training non-linear encoders $f_\theta^1$ and $f_\theta^2$ in order to minimize the weighted combination of the prediction losses and of the alignment loss, for several values of the coefficient $\lambda \in [0, 1]$ that controls the trade-off. Although the encoders are non-linear, it is common to keep linear classification heads \cite{caron2021emerging}, leading to the following loss function:
\begin{align}
    \mathcal{L} = \|f_{\theta}(\X_1)\W_1 - \Y_1\|_F^2 + \|f_{\theta}(\X_2)\W_2 - \Y_2\|_F^2
	 + \lambda \|f_{\theta}(\X_1)\Q_1 - f_{\theta}(\X_2)\|_F^2 
\end{align}
For each value of $\lambda$, we made several random parameter initializations. The best training prediction losses are plotted against the alignment loss in Fig. \ref{fig:alg_vs_pred_exp1}. It shows that the more aligned the data, the worse the prediction training performances.
Although, for small values of $\lambda$, the difference of prediction losses is not significant.
Thus, this controlled experiment seems to support that information loss by alignment can also
happen with non-linear encoders.

\subsection{Experiments on a remote sensing dataset}

\begin{figure}
    \center
    \includegraphics[width=0.7\textwidth]{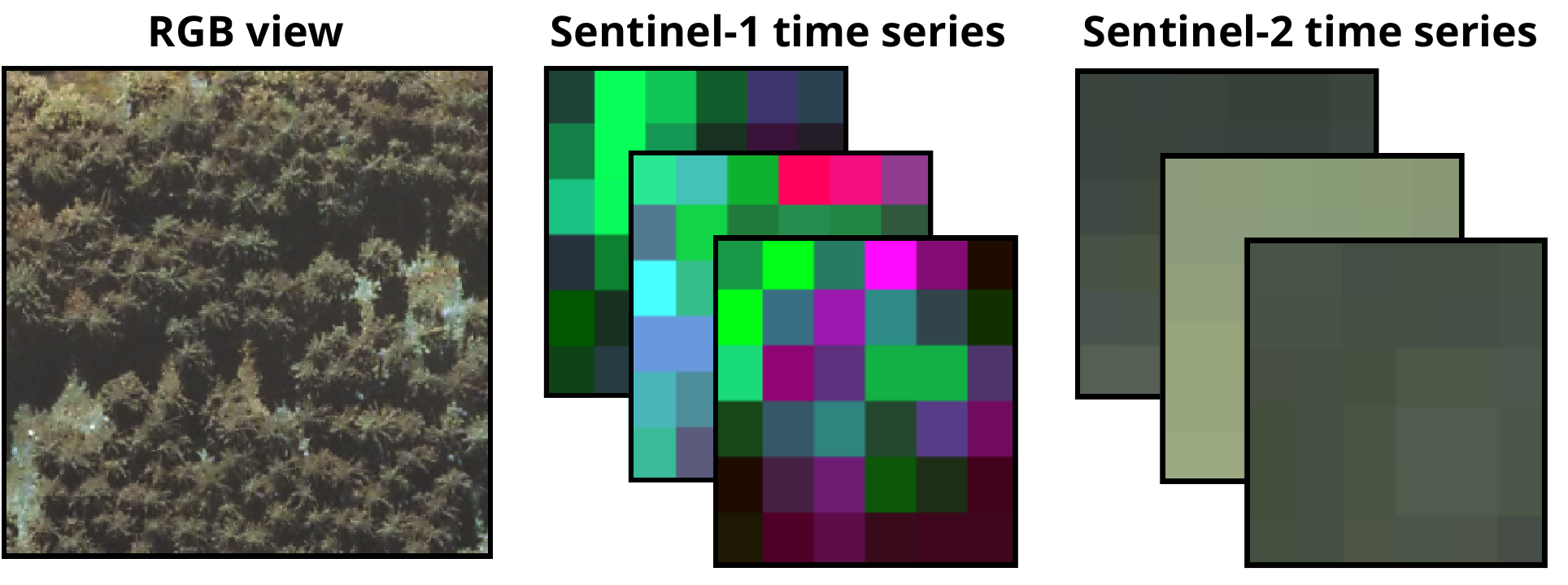}
    \caption{Illustration of a data sample from the TreeSatAI-Time-Series dataset \cite{astruc2024omnisat}.}
    \label{fig:ts}
\end{figure}

\noindent
We selected the TreeSatAI-Time-Series dataset \cite{astruc2024omnisat} as a real-world multimodal dataset.
It comprises very high resolution RGB-NIR images, Sentinel-1 (radar) image time series, and Sentinel-2 (optical) image time series acquired over forest areas. 
The ground truth consists in multi-label annotations of tree species at the image level.
As a multimodal model, we used the OmniSat architecture \cite{astruc2024omnisat}, though with much less parameters than the original model, as it better suited our computational resources.
Also, we selected a subset of the data comprising the most frequent classes, and only kept the Sentinel-1 radar image time series and the Sentinel-2 multispectral image time series. \\

\begin{wrapfigure}{r}{0.5\textwidth}
    \includegraphics[width=0.5\textwidth]{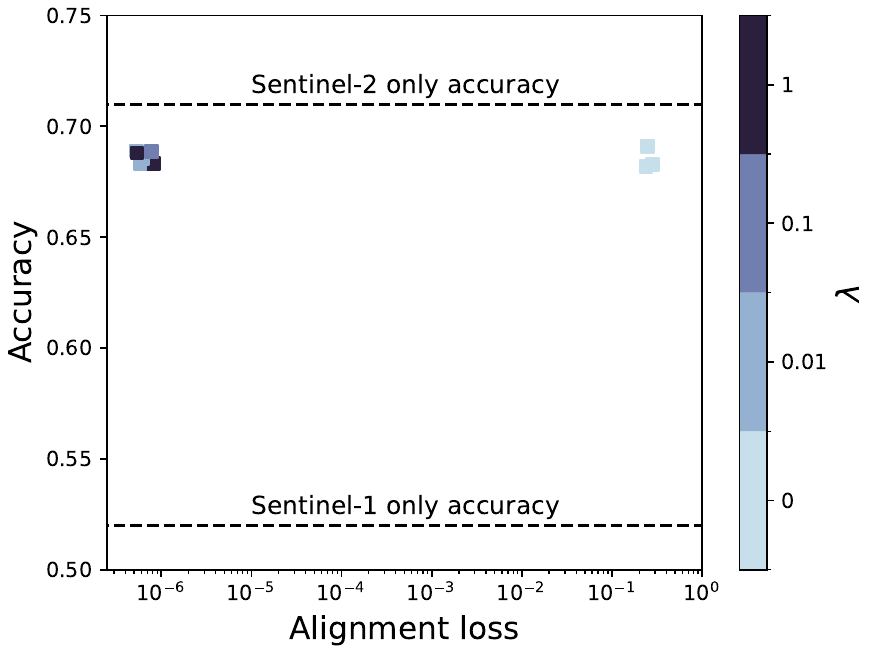}
    \captionof{figure}{Train accuracy VS alignment loss for several values of the coefficient $\lambda$ and random network initializations. Horizontal lines denote the accuracy reached by models trained on a single modality.} \label{fig:alg_vs_acc_exp2}
\end{wrapfigure}

\noindent
In this real-world setting, the labeled data consists in one target only (the multi-label classes), and we have no prior knowledge about the informativeness of each modality with respect to the target.
In other words, assessing whether the \textit{multi-view non-redundancy} assumption holds is not possible.
However, a supervised training with a single modality shows that the multispectral image time series likely provide much more information about the tree species than the radar image time series, with +20\% accuracy on the training set.
That being said, the radar data still could provide useful information on some samples. \\

\noindent
Fig. \ref{fig:alg_vs_acc_exp2} shows the train accuracy against the alignment loss for several values of $\lambda$ and random network initializations.
It shows that alignment had no influence on the prediction accuracy: it did not decrease the model performance.
We assume that radar data did not provide complementary information with respect to optical data.
As a result, the encoder dedicated to optical data likely learned useful information for the classification task, while the encoder dedicated to radar data likely fitted the multispectral latent representations.
Nevertheless, we would like to stress that these experimental results do not call into question the value of radar images for learning generic latent representations through pre-training.

\section{Related work} \label{sec:related}

Several works have investigated multimodal CL throuhg the lens of information theory. In particular, the theory of Partial Information Decomposition (PID) \cite{williams2010nonnegative} allows to model the information that two variables $X_1, X_2$ provide about a target variable $Y$. 
PID decomposes information into three forms of interaction: redundant information in $X_1$ and $X_2$, unique information in $X_1$ / $X_2$, or synergistic information that only emerges when both $X_1$ and $X_2$ are observed.
\cite{liang2023factorized} use the PID formalism in order to demonstrate that standard CL methods are suboptimal under the \textit{multi-view non-redundancy assumption}\footnote{Formalized as follows: $\exists \epsilon > 0$ such that $I(X_1;Y \vert X_2) > \epsilon$ or $I(X_2;Y \vert X_1) > \epsilon$.} and under the assumption that the learned representations $Z_i$ carry redundant information about the targets with data $X_j$\footnote{$I(Z_1;Y \vert X_2) = I(Z_2;Y \vert X_1) = 0$.}. 
In the same vein, \cite{dufumier2025align} show that standard CL methods are limited to only learn the redundant information under the mutli-view redundancy assumption.
In contrast, in this work, we assume multi-view non-redundancy and rely on weaker assumptions in order to demonstrate, in a simplified setting, the suboptimality of standard CL.
In order to go beyond learning redundant features across modalities, \cite{dufumier2025align} introduced an approach, named CoMM, aiming to maximize mutual information between representations of augmented multimodal data. Using the PID formalism, they show that CoMM can extract unique and synergistic information from the different modalities.
Although CoMM was not experimented on remote sensing datasets, it seems like a promising CL algorithm to improve the combination of multimodal satellite data.
\vspace{-0.3cm}

\section{Limitations and perspectives} \label{sec:conclusion}

In this paper, we presented a potential mathematical framework to study multimodal contrastive learning methods.
In a linear regime, we demonstrated that self-supervised learning by alignment leads to a loss of modality-specific information when modalities provide different kind of information about the targets.
Furthermore, preliminary numerical experiments supported our theoretical insight in a non-linear regime. 
However, our theoretical and empirical analyses suffer from multiple limitations that motivate future work and additional numerical experiments.
First, we have modeled the alignment task a regression task, while standard CL methods maximize the mutual information between modalities (in practice, a lower bound using the NCE loss \cite{oord2018representation}).
Our modeling choice may be closer to Joint-Embedding Predictive Architectures \cite{assran2023self} that, given the representation of a context view (\textit{i.e.} one modality) and metadata, aim to predict the representation of a target view (\textit{i.e} another modality).
Second, the assumptions under which Th. \ref{th:2} holds are difficult to verify in practice. In realistic scenarios, the notion of $\sigma$-informativeness is limited to characterize the representations of the data, computed by non-linear mappings, and not the data itself. 
Third, we have demonstrated that CL can lead to a loss of information, but we have not quantified nor qualified this loss. In particular, an open research question is whether alignment focuses on \textit{simple} features at the expense of \textit{complex} features.
Finally, our numerical experiments do not account for realistic scenarios where models are pre-trained through a CL algorithm, and then fine-tuned for a specific downstream task. We have neither studied the generalization performances of the models.
Therefore, we would like to address those limitations in future work.
Besides, we would like to study how the disentanglement of modality-generic and modality-specific features, if possible, could improve CL methods.



%
\bibliographystyle{apalike}
\bibliography{mybibliography}
\nocite{*}

\newpage

\appendix

\section{Appendix}

\subsection{Proof of Theorem \ref{th:1} \label{app:th1}}

The proof of Theorem \ref{th:1} is, in part, analogue to the proof of Theorem 1 in \cite{balestrierolearning}, where the objective function balances between one prediction term and a reconstruction term, rather than between two prediction terms and an alignment term, as it is in our case. \\

\noindent
First, we express optimal parameters $\W_1^\ast$ and $\Q_1^\ast$ as functions of $\V_1$. Second, we show that a closed-form expression of the optimal parameter $\V_1^\ast$ can be obtained by solving a generalized eigenvalue problem. \\

\noindent
Let us denote the objective function of problem \eqref{eq:optim_pb} as $\mathcal{L}_1$. Recalling that $\|\M\|_F^2 = Tr(\M^T\M)$ and that the trace is invariant to cyclic permutations, we can expand $\mathcal{L}_1$ as follows:
\begin{align*}
	\mathcal{L}_1 & := Tr(\W_1^T\V_1^T\X_1^T\X_1\V_1\W_1) + Tr(\Y_1^T\Y_1) 
	 - Tr(\W_1^T\V_1^T\X_1^T\Y_1) - Tr(\Y_1^T\X_1\V_1\W_1) \\ 
	& + \lambda \big( Tr(\Q_1^T\V_1^T\X_1^T\X_1\V_1\Q_1) + Tr(\V_2^T\X_2^T\X_2\V_2) 
	 - Tr(\Q_1^T\V_1^T\X_1^T\X_2\V_2) - Tr(\V_2^T\X_2^T\X_1\V_1\Q_1) \big)\\
	& = \|\Y_1\|_F^2 + Tr(\W_1^T\V_1^T\X_1^T\X_1\V_1\W_1) - 2\: Tr(\W_1^T\V_1^T\X_1^T\Y_1)  \\
	& + \lambda \big( \|\X_2\V_2\|_F^2 + Tr(\Q_1^T\V_1^T\X_1^T\X_1\V_1\Q_1) - 2\:Tr(\Q_1^T\V_1^T\X_1^T\X_2\V_2) \big).
\end{align*}
$\mathcal{L}_1$ is convex with respect to $\W_1$ (and $\Q_1$), therefore the optimal parameter  $\W_1^\ast$ (and $\Q_1^\ast$) is solution to $\nabla_{\W_1}\mathcal{L}_1 = 0$ (and $\nabla_{\Q_1}\mathcal{L}_1 = 0$):
\begin{align*}
	\nabla_{\W_1}\mathcal{L}_1 = 0 & \iff 2\V_1^T\X_1^T\X_1\V_1\W_1 - 2\V_1^T\X_1^T\Y_1 = 0 \\
	& \iff \V_1^T\X_1^T\X_1\V_1\W_1 = \V_1^T\X_1^T\Y_1
\end{align*}
Recall that we supposed that the rank of $\X_1$ is equal to $D$. Because $\X_1^T\X$ and $\X_1$ have the same kernel, it follows from the "rank-nullity" theorem that $\X_1^T\X_1$ and $\X_1$ have the same rank.  
Since $\X_1^T\X_1$ is full rank, it is invertible. It follows that:
\begin{equation}
	\W_1^\ast = (\V_1^T\X_1^T\X_1\V_1)^{-1}\V_1^T\X_1^T\Y_1.
\end{equation}
In the same way,
\begin{equation}
	\Q_1^\ast = (\V_1^T\X_1^T\X_1\V_1)^{-1}\V_1^T\X_1^T\X_2\V_2.
\end{equation}
Plugging the values of $\W_1^\ast$ and $\Q_1^\ast$ in $\mathcal{L}_1$, we first notice that:
\begin{align*}
	Tr(\W_1^{\ast T}\V_1^T\X_1^T\X_1\V_1\W_1^\ast) & = Tr(\Y_1^T\X_1\V_1(\V_1^T\X_1^T\X_1\V_1)^{-1}\V_1^T\X_1^T\X_1\V_1(\V_1^T\X_1^T\X_1\V_1)^{-1}\V_1^T\X_1^T\Y_1) \\
	& = Tr(\Y_1^T\X_1\V_1(\V_1^T\X_1^T\X_1\V_1)^{-1}\V_1^T\X_1^T\Y_1) \\
	& = Tr(\Y_1^T\X_1\V_1\W_1^\ast) \\
	\mbox{and} & \\
	Tr(\Q_1^{\ast T}\V_1^T\X_1^T\X_1\V_1\Q_1^\ast) & = Tr(\V_2^T\X_2^T\X_1\V_1\Q_1^\ast).
\end{align*}
Therefore, the loss simplifies as follows:
\begin{align}
	\mathcal{L}_1 & = \|\Y_1\|_F^2 + \lambda \|\X_2\V_2\|_F^2 - Tr(\Y_1^T\X_1\V_1\W_1^\ast) - \lambda Tr(\V_2^T\X_2^T\X_1\V_1\Q_1^\ast) \nonumber \\
	& = \|\Y_1\|_F^2 - Tr(\Y_1^T\X_1\V_1(\V_1^T\X_1^T\X_1\V_1)^{-1}\V_1^T\X_1^T\Y_1) \nonumber \\
	& + \lambda \|\X_2\V_2\|_F^2 - \lambda Tr(\V_2^T\X_2^T\X_1\V_1(\V_1^T\X_1^T\X_1\V_1)^{-1}\V_1^T\X_1^T\X_2\V_2) \nonumber \\
	& = \|\Y_1\|_F^2 +  \lambda \|\X_2\V_2\|_F^2
	- Tr\big((\V_1^T\X_1^T\X_1\V_1)^{-1}\V_1^T\X_1^T(\Y_1\Y_1^T + \lambda\X_2\V_2(\X_2\V_2)^T)\X_1\V_1\big). \label{eq:L1_simplified}
\end{align}
Hence, minimizing $\mathcal{L}_1$ is equivalent at maximizing the third term of \eqref{eq:L1_simplified}, which is itself equivalent to the following optimization problem:
\begin{align}
	\max_{\V_1} & \:\:\:\: Tr\big(\V_1^T\X_1^T(\Y_1\Y_1^T + \lambda\X_2\V_2(\X_2\V_2)^T)\X_1\V_1\big) \label{eq:eq_pb} \\
	\mbox{subject to} & \:\: \V_1^T\X_1^T\X_1\V_1 = \I_K \label{eq:cst}
\end{align}
Given that $\overbrace{\X_1^T\X_1}^{=\B}$ and $\overbrace{\X_1^T(\Y_1\Y_1^T + \lambda\X_2\V_2(\X_2\V_2)^T)\X_1}^{=\A}$ are symmetric, we can actually solve \eqref{eq:eq_pb} by solving the following generalized eigenvalue problem \cite{ghojogh2019eigenvalue}:
\begin{equation}
	\A \V_1' = \B \V_1' \L \label{eq:eigen}
\end{equation}
where the columns of $\V_1' \in \mathbb{R}^{D \times D}$ are the eigenvectors of $\A$ and the diagonal elements of $\L$ its eigenvalues. Then, we can simply take the top $K$ eigenvectors of $\V_1'$ to build $\V_1$, \textit{i.e.} $\V_1 = (\V_1')_{\cdot, 1:K}$, in order to minimize $\mathcal{L}_1$. From the constraint \eqref{eq:cst}, we have that:
\begin{align*}
	\eqref{eq:cst} & \iff \V_1'^T \P_{\X_1^T\X_1} \D_{\X_1^T\X_1} \P_{\X_1^T\X_1}^T \V_1' = \I_K
\end{align*}
Therefore, for any orthogonal matrix $\P \in \mathbb{R}^{D \times D}$ such that $\V_1' = \P_{\X_1^T\X_1} \D_{\X_1^T\X_1}^{-\frac{1}{2}} \P$, the constraint \eqref{eq:cst} is satisfied. Then, we can derive from the generalized eigenvalue problem that:
\begin{align*}
	\eqref{eq:eigen} & \iff \V_1'^T \A \V_1 = \V_1'^T \B \V_1' \L \\
	& \iff \P^T \D_{\X_1^T\X_1}^{-\frac{1}{2}} \P_{\X_1^T\X_1}^T \A \P_{\X_1^T\X_1} \D_{\X_1^T\X_1}^{-\frac{1}{2}} \P = \L \\
	& \iff \P_H^T \H \P_H = \L
\end{align*}
where $\H = \D_{\X_1^T\X_1}^{-\frac{1}{2}} \P_{\X_1^T\X_1}^T \A \P_{\X_1^T\X_1} \D_{\X_1^T\X_1}^{-\frac{1}{2}}$ and the columns of $\P_H = \P$ are the eigenvectors of $\H$. Because $\H$ is symmetric, $\P_H$ is indeed orthogonal, and this yields conditions for the optimal parameters regarding modality 1. \\

\noindent
Regarding modality 2, the beginning of the proof is the same. The difference is that we can derive a simpler expression of $\H_2$ by expanding $\X_2$ from its singular value decomposition:
\begin{align*}
	\X_2 = \P_{\X_2\X_2^T} \S_{\X_2} \P_{\X_2^T\X_2}^T
\end{align*}
where $\S_{\X_2} \in \mathbb{R}^{N \times D}$ is a rectangular-diagonal matrix whose diagonal elements are the square root of $\X_2^T\X_2$ eigenvalues. Hence,
\begin{align*}
	\D_{\X_2^T\X_2}^{-\frac{1}{2}} \P_{\X_2^T\X_2}^T \X_2^T & = \D_{\X_2^T\X_2}^{-\frac{1}{2}} \underbrace{\P_{\X_2^T\X_2}^T \P_{\X_2^T\X_2}}_{= \I_{D}} \S_{\X_2}^T \P_{\X_2\X_2^T}^T \\
	& = \begin{bmatrix}
		\I_D & \boldsymbol{0}_{D \times (N-D)}
	\end{bmatrix}
	\P_{\X_2\X_2^T}^T  \\
	& = (\P_{\X_2\X_2^T})_{\cdot, 1:D}^T.
\end{align*}
Plugging the above equation into $\H_2$, we have that: 
\begin{align*}
	\H_2 & = (\P_{\X_2\X_2^T})_{\cdot, 1:D}^T \Y_2 \Y_2^T (\P_{\X_2\X_2^T})_{\cdot, 1:D} \\
	& = (\P_{\X_2\X_2^T})_{\cdot, 1:D}^T \P_{\Y_2\Y_2^T} \D_{\Y_2\Y_2^T} \P_{\Y_2\Y_2^T}^T (\P_{\X_2\X_2^T})_{\cdot, 1:D} \\
	& = (\P_{\X_2\X_2^T})_{\cdot, 1:D}^T (\P_{\Y_2\Y_2^T})_{\cdot, 1:D} \D_{\Y_2^T\Y_2} (\P_{\Y_2\Y_2^T})_{\cdot, 1:D}^T (\P_{\X_2\X_2^T})_{\cdot, 1:D}.
\end{align*}
Therefore:
\begin{align*}
	\P_{\H_2} = (\P_{\X_2\X_2^T})_{\cdot, 1:D}^T (\P_{\Y_2\Y_2^T})_{\cdot, 1:D}.
\end{align*}

\subsection{Proof of Theorem \ref{th:2}} \label{app:th2}

\noindent
The proof is made by contradiction, and is divided into five steps. The idea is to show that if $\V_1^\star$ is the optimal solution of the prediction task 1 and the alignment task, then $\sigma_{21} \geq \sigma_{1K}$, which contradicts assumption ii). \\

\noindent
\textbf{Step 1} Let us assume that $\V_1^\star$ is the optimal solution of the prediction task 1 and the alignment task, defined as in Theorem \ref{th:1}. Therefore, $\V_1^\ast$ does not depend on $\lambda$, and it follows that, with the same notations as in Theorem \ref{th:1}, $(P_{\H_1})_{\cdot, 1:K}$ does not depend on $\lambda$ either.
By definition, $(P_{\H_1})_{\cdot, 1:K}^T \H_1 (P_{\H_1})_{\cdot, 1:K}$ is equal to a diagonal matrix $\Lambda \in \mathbb{R}^{K \times K}$:
\begin{equation}
\begin{array}{ccc}
	(P_{\H_1})_{\cdot, 1:K}^T \H_1 (P_{\H_1})_{\cdot, 1:K} = \Lambda
	& \iff & \P_\Lambda^T(\Y_1\Y_1^T + \lambda \Z_2\Z_2^T) \P_\Lambda = \Lambda \\
	& \iff & \P_\Lambda^T\Y_1\Y_1^T\P_\Lambda + \lambda \P_\Lambda^T\Z_2\Z_2^T\P_\Lambda = \Lambda
\end{array}
\label{eq:lambda}
\end{equation}
where $\P_\Lambda = \X_1  \P_{\X_1^T\X_1} \D_{\X_1^T\X_1}^{-\frac{1}{2}} (P_{\H_1})_{\cdot, 1:K}$. 
The columns of $\P_\Lambda$ are the top-K eigenvectors of $(\Y_1\Y_1^T + \lambda \Z_2\Z_2^T)$. 
Note that we may find a $\lambda$ for which $\Lambda$ is diagonal while $\P_\Lambda^T\Y_1\Y_1^T\P_\Lambda$ and $\P_\Lambda^T\Z_2\Z_2^T\P_\Lambda$ are not. 
However, eq. \eqref{eq:lambda} holds for any $\lambda$, and $\P_\Lambda$ does not depend on $\lambda$, therefore the columns of $\P$ are also the top-K eigenvectors of $\Y_1\Y_1^T$ and $\Z_2\Z_2^T$.
It results that the intersection of the top-K eigenspaces of $\Z_2\Z_2^T$ and $\Y_1\Y_1^T$ is of dimension $K$. \\

\noindent
\textbf{Step 2} Let us show that assumption i) implies that the intersection of the top-K eigenspaces of $\Y_2\Y_2^T$ and $\X_2\X_2^T$ is of dimension $K$. 
Let us denote $\hat{\Y}_2 = \X_2(\X_2^T\X_2)^{-1}\X_2^T\Y_2$ the OLS prediction of $\Y_2$ from $\X_2$. 
First, we notice that we always have the following:
\begin{align}
	\|\Y - \hat{\Y}\|_F^2 & = Tr[(\Y - \hat{\Y})^T(\Y - \hat{\Y})]
	= \|\Y\|_F^2 - Tr[\Y^T\hat{\Y}] \\ 	\label{eq:normtrace}
	& \mbox{ because } \hat{\Y}^T\hat{\Y} = \Y^T\hat{\Y} \nonumber.
\end{align}
Second, we can show from the eigenvalue value decomposition of $\X_2^T\X_2$ and from the singular value decomposition of $\X_2$ that:
\begin{align*}
	\Y^T\hat{\Y} & = \Y^T \X (\P_{\X^T\X} \D_{\X^T\X} \P_{\X^T\X}^T)^{-1} \X^T \Y \\
	& = \Y_2^T \underbrace{\X \P_{\X^T\X}}_{=(\P_{\X\X^T})_{\cdot, 1:D} \D_{\X^T\X}^{\frac{1}{2}}} \D_{\X^T\X}^{-1} \underbrace{\P_{\X^T\X}^T \X^T}_{=\D_{\X^T\X}^{\frac{1}{2}}(\P_{\X\X^T})_{\cdot, 1:D}^T} \Y \\
	& = \Y^T (\P_{\X\X^T})_{\cdot, 1:D} (\P_{\X\X^T})_{\cdot, 1:D}^T \Y.
\end{align*}
Defining $\hat{\tilde{\Y}}_{2K} = \tilde{\Y}_{2K, 0}^0({\tilde{\Y}_{2K, 0}}^T\tilde{\Y}_{2K,0})^{-1}{\tilde{\Y}_{2K,0}}^T\Y$, where $\tilde{\Y}_{2K,0}$ is composed of $\tilde{\Y}_{2K}$ for the first $K$ rows, and completed by rows of zeros. 
It follows that assumption i) is equivalent to:
\begin{align*}
    \sigma_{22} \geq \sigma_{2K} & \iff NC_2(1-\sigma_{22}) \leq NC_2(1 - \sigma_{2K}) \\
    & \iff \|\Y_2 - \hat{\Y}_2\|_F^2 \leq \|\Y_2 - \hat{\tilde{\Y}}_{2K}\|_F^2 \\
    & \iff \|\Y_2\|_F^2 - Tr[\Y_2^T \hat{\Y}_2] \leq  \|\Y_2\|_F^2 - \underbrace{Tr[\hat{\tilde{\Y}}_{2K}^T \hat{\tilde{\Y}}_{2K}]}_{=Tr[(\D_{\Y_2\Y_2^T})_{1:K}]} \\
    & \iff Tr[\Y_2^T\hat{\Y}_2] \geq Tr[(\D_{\Y_2\Y_2^T})_{1:K}] \\
    & \iff Tr[(\P_{\X_2\X_2^T})_{\cdot, 1:D}^T \Y_2  \Y_2^T (\P_{\X_2\X_2^T})_{\cdot, 1:D} ] \geq Tr[(\D_{\Y_2\Y_2^T})_{1:K}] \\
    & \iff Tr[(\P_{\X_2\X_2^T})_{\cdot, 1:D}^T (\P_{\Y_2\Y_2^T})_{\cdot, 1:D} (\D_{\Y_2\Y_2^T})_{1:D} (\P_{\Y_2\Y_2^T})_{\cdot, 1:D}^T (\P_{\X_2\X_2^T})_{\cdot, 1:D} ] \\
    & \:\:\:\:\:\:\:\:\:\:\:\:\:\:\:\:\:\:\:\:\:\:\:\:\:\:\:\:\:\:\:\:\:\:\:\: \geq Tr[(\D_{\Y_2\Y_2^T})_{1:K}] \\
    &  \iff Tr[P_{\otimes} (\D_{\Y_2\Y_2^T})_{1:D}P_{\otimes}^T] \geq Tr[(\D_{\Y_2\Y_2^T})_{1:K}].
\end{align*}
where $P_{\otimes} = (\P_{\X_2\X_2^T})_{\cdot, 1:D}^T (\P_{\Y_2\Y_2^T})_{\cdot, 1:D}$ is an orthogonal matrix.
Since, from the original assumptions in Section \ref{sec: setting}, the eigenvalues of $\Y_2\Y_2^T$ are such that $\lambda_k > \sum_{i=k+1} \lambda_i$ for every $k$, this proves that the intersection of the top-K eigenspaces of $\X_2\X_2^T$ and $\Y_2\Y_2^T$ is of dimension K. \\

\noindent
\textbf{Step 3} Let us show that the intersection of the top-K eigenspaces of $\Z_2\Z_2^T$ and $\X_2\X_2^T$ is of dimension $K$.
From Theorem \ref{th:1}, we have that:
\begin{align*}
	\V_2^\ast = \P_{\X_2^T\X_2} \D_{\X_2^T\X_2}^{-\frac{1}{2}} (\P_{\H_2})_{\cdot, 1:K}\R
\end{align*}
where $\R \in \mathbb{R}^{K \times K}$ is a rotation matrix. Recalling that $\P_{\H_2} = (\P_{\X_2\X_2^T})_{\cdot, 1:D}^T (\P_{\Y_2\Y_2^T})_{\cdot, 1:D}$, and using the singular value decomposition of $\X_2 = \P_{\X_2\X_2^T} \D_{\X_2^T\X_2}^{\frac{1}{2}}  \P_{\X_2^T\X_2}^T$, we can derive that:
\begin{align*}
    \Z_2 = \X_2 \V_2^\ast & = \X_2 \P_{\X_2^T\X_2} \D_{\X_2^T\X_2}^{-\frac{1}{2}} (\P_{\X_2\X_2^T})_{\cdot, 1:D}^T (\P_{\Y_2\Y_2^T})_{\cdot, 1:K} \R \\
	& = (\P_{\X_2\X_2^T})_{\cdot, 1:D} (\P_{\X_2\X_2^T})_{\cdot, 1:D}^T (\P_{\Y_2\Y_2^T})_{\cdot, 1:K} \R
\end{align*}
which leads to:
\begin{align*}
	\Z_2 \Z_2^T = (\P_{\X_2\X_2^T})_{\cdot, 1:D} & (\P_{\X_2\X_2^T})_{\cdot, 1:D}^T (\P_{\Y_2\Y_2^T})_{\cdot, 1:K} \underbrace{\R \R^T}_{= \I_K} \\
	& \cdot (\P_{\Y_2\Y_2^T})_{\cdot, 1:K}^T (\P_{\X_2\X_2^T})_{\cdot, 1:D} (\P_{\X_2\X_2^T})_{\cdot, 1:D}^T.
\end{align*}
From step 2, we have that the intersection of the top-K eigenspaces of $\X_2\X_2^T$ and $\Y_2\Y_2^T$ is of dimension K. Under the assumption that the top-$K$ eigenvalues of $\X_2\X_2^T$ have a multiplicity of one, this means that there exists a permutation matrix $\Q_2 \in \mathbb{R}^{N \times N}$ such that:
\begin{align*}
	\P_{\Y_2\Y_2^T} = \P_{\X_2\X_2^T} \Q_2.
\end{align*}
Therefore,
\begin{align*}
	(\P_{\X_2\X_2^T})_{\cdot, 1:D}^T (\P_{\Y_2\Y_2^T})_{\cdot, 1:K} & = (\P_{\X_2\X_2^T})_{\cdot, 1:D}^T \P_{\X_2\X_2^T} (\Q_2)_{\cdot, 1:K}
	= \begin{bmatrix}
		(\Q_2)_{\cdot, 1:K} \\
		\boldsymbol{0}_{(D-K) \times K}
	\end{bmatrix}
\end{align*}
and
\begin{align*}
	\Z_2 \Z_2^T & = (\P_{\X_2\X_2^T})_{\cdot, 1:D} \begin{bmatrix}
		\I_K & \boldsymbol{0}_{K \times (D-K)} \\
		\boldsymbol{0}_{(D-K) \times K} & \boldsymbol{0}_{(D-K) \times (D-K)}	
\end{bmatrix}	 (\P_{\X_2\X_2^T})_{\cdot, 1:D}^T \\
	& = (\P_{\X_2\X_2^T})_{\cdot, 1:K} \I_K (\P_{\X_2\X_2^T})_{\cdot, 1:K}^T.
\end{align*}
Thus, the intersection of the top-K eigenspaces of $\X_2\X_2^T$ and $\Z_2\Z_2^T$ is of dimension $K$. \\

\noindent
\textbf{Step 4}  From step 1, we have that the intersection of the top-K eigenspaces of $\Y_1\Y_1^T$ and $\Z_2 \Z_2^T$ is of dimension K, and in particular that  $\Y_1\Y_1^T$ and $\Z_2 \Z_2^T$ share the same top-K eigenvectors. Therefore, the intersection of the top-K eigenspaces of $\Z_2\Z_2^T$ and $\X_2 \X_2^T$ is the same space as the intersection of the top-K eigenspaces of $\Y_1\Y_1^T$ and $\X_2 \X_2^T$.
From step 2, we have that the intersection of the top-K eigenspaces of $\Z_2\Z_2^T$ and $\X_2 \X_2^T$ is of dimension K, hence the desired result; the intersection of the top-K eigenspaces of $\Y_1\Y_1^T$ and $\X_2 \X_2^T$ is of dimension K.  \\

\noindent
\textbf{Step 5} Let us conclude the proof by showing that if the intersection of the top-K eigenspaces of $\Y_1\Y_1^T$ and $\X_2 \X_2^T$ is of dimension K, then $\sigma_{21} \geq \sigma_{1K}$.
Let us denote by $\hat{\Y}_{21}$ the OLS predictions of $\Y_1$ computed from $\X_2$.
Recalling (10) and the singular value decomposition of $\Y_1$, we have that:
\begin{align*}
    \Y_1^T\hat{\Y}_{21} = & \P_{\Y_1^T\Y_1} \D_{\Y_1^T\Y_1}^{\frac{1}{2}} (\P_{\Y_1\Y_1^T})_{\cdot, 1:C_1}^T (\P_{\X_2\X_2^T})_{\cdot, 1:D_2} \\
    & \cdot (\P_{\X_2\X_2^T})_{\cdot, 1:D_2}^T (\P_{\Y_1\Y_1^T})_{\cdot, 1:C_1} \D_{\Y_1^T\Y_1}^{\frac{1}{2}} \P_{\Y_1^T\Y_1}^T.
\end{align*}
From step 4, we have that the intersection of the top-K eigenspaces of $\Y_1\Y_1^T$ and $\X_2 \X_2^T$ is of dimension K. Recalling the assumptions from Section \ref{sec: setting}, the top-$K$ eigenvalues of $\X_2\X_2^T$ have a multiplicity of one, there exists a permutation matrix $\Q_1 \in \mathbb{R}^{N \times N}$ such that:
\begin{align*}
	\P_{\Y_1\Y_1^T} = \P_{\X_2\X_2^T} \Q_1.
\end{align*}
Therefore,
\begin{align*}
    (\P_{\Y_1\Y_1^T})_{\cdot, 1:C_1}^T (\P_{\X_2\X_2^T})_{\cdot, 1:D_2} & = \begin{bmatrix} (\P_{\Y_1\Y_1^T})_{\cdot, 1:K}^T \\ (\P_{\Y_1\Y_1^T})_{\cdot, K+1:C_1}^T  \end{bmatrix} \begin{bmatrix} (\P_{\X_2\X_2^T})_{\cdot, 1:K} & (\P_{\X_2\X_2^T})_{\cdot, K+1:D_2} \end{bmatrix} \\
    & = \begin{bmatrix} (\P_{\Y_1\Y_1^T})_{\cdot, 1:K}^T (\P_{\X_2\X_2^T})_{\cdot, 1:K} & (\P_{\Y_1\Y_1^T})_{\cdot, 1:K}^T (\P_{\X_2\X_2^T})_{\cdot, K+1:D_2} \\
    (\P_{\Y_1\Y_1^T})_{\cdot, K+1:C_1}^T (\P_{\X_2\X_2^T})_{\cdot, 1:K} & (\P_{\Y_1\Y_1^T})_{\cdot, K+1:C_1}^T (\P_{\X_2\X_2^T})_{\cdot, K+1:D_2}
    \end{bmatrix} \\
    & = \begin{bmatrix} (\Q_1)_{K \times K} & \boldsymbol{0}_{K \times (D_2-K)} \\
    \boldsymbol{0}_{(C_1-K) \times K} & (\P_{\Y_1\Y_1^T})_{\cdot, K+1:C_1}^T (\P_{\X_2\X_2^T})_{\cdot, K+1:D_2} \end{bmatrix}.
\end{align*}
It results that
\begin{align*}
    \Y_1^T\hat{\Y}_{21} & = \P_{\Y_1^T\Y_1} \D_{\Y_1^T\Y_1}^{\frac{1}{2}} \begin{bmatrix} \boldsymbol{I} & \boldsymbol{0} \\
    \boldsymbol{0} & \A \end{bmatrix} \D_{\Y_1^T\Y_1}^{\frac{1}{2}} \P_{\Y_1^T\Y_1}^T \\
    & = (\P_{\Y_1^T\Y_1})_{\cdot, 1:K} (\D_{\Y_1^T\Y_1})_{1:K}^{\frac{1}{2}} (\D_{\Y_1^T\Y_1})_{1:K}^{\frac{1}{2}} (\P_{\Y_1^T\Y_1})_{\cdot, 1:K}^T \\
    & + (\P_{\Y_1^T\Y_1})_{\cdot, K+1:C_1} (\D_{\Y_1^T\Y_1})_{K+1:C_1}^{\frac{1}{2}} \A (\D_{\Y_1^T\Y_1})_{K+1:C_1}^{\frac{1}{2}} (\P_{\Y_1^T\Y_1})_{\cdot, K+1:C_1}^T
\end{align*}
where $\A = (\P_{\Y_1\Y_1^T})_{\cdot, K+1:C_1}^T (\P_{\X_2\X_2^T})_{\cdot, K+1:D_2} (\P_{\X_2\X_2^T})_{\cdot, K+1:D_2}^T (\P_{\Y_1\Y_1^T})_{\cdot, K+1:C_1}$.
Hence,
\begin{align*}
    Tr[\Y_1^T\hat{\Y}_{21}] = \|\tilde{\Y}_{1K} \|_F^2 + \| (\P_{\X_2\X_2^T})_{\cdot, K+1:D_2}^T (\P_{\Y_1\Y_1^T})_{\cdot, K+1:C_1} (\D_{\Y_1^T\Y_1})_{K+1:C_1}^{\frac{1}{2}} (\P_{\Y_1^T\Y_1})_{\cdot, K+1:C_1}^T \|_F^2
\end{align*}
where $\tilde{\Y}_{1K} = \Y_1 (\P_{\Y_1^T\Y_1})_{\cdot, 1:K}$ is the projection of $\Y_1$ on the top-K eigenspaces of $\Y_1^T\Y_1$.
Thus, we deduce that:
\begin{align*}
    \|\Y_1 - \hat{\Y}_{21}\|_F^2 & =  \|\Y_1\|_F^2 - Tr[\Y_1^T\hat{\Y}_{21}] \\
    & \leq \|\Y_1\|_F^2 - \|\tilde{\Y}_{1K} \|_F^2 = \epsilon_{1K}
\end{align*}
where $\epsilon_{1K}$ is the training error of the OLS estimator for predicting $\Y_1$ from $\tilde{\Y}_1$.
Therefore, the training error of the OLS estimator for predicting $\Y_1$ from $\X_2$ is lower or equal than $\epsilon_{1K}$.
By definition, this implies that $\sigma_{21} \geq \sigma_{1K}$.
This contradicts the assumption that $\sigma_{21} < \sigma_{1K}$.
By contradiction, we conclude that $\V_1^\star$ does not minimize the prediction task 1 and the alignment task.
Furthermore, since $\V_1^\star$ does not minimize the prediction task 1, it necessarily implies that $\Z_1$ is $\sigma_{11}^z$-informative of $\Y_1$, with $\sigma_{11}^z < \sigma_{11}$.


\begin{remark}
In the proof, we only need in step 4 to prove that the intersection of the top-K eigenspaces of $\Y_1\Y_1^T$ and $\X_2 \X_2^T$ is of dimension K. However, a much stringer result holds: using steps 2 and 3, one can actually show that the top-K eigenspaces of $\Y_1\Y_1^T$, $\Z_2\Z_2^T$ and $\X_2 \X_2^T$ actually generate the exact same subspace.
\end{remark}

\end{document}